\def\eqref#1{equation~\ref{#1}}
\def\1{\bm{1}}
\DeclareMathAlphabet{\mathsfit}{\encodingdefault}{\sfdefault}{m}{sl}
\SetMathAlphabet{\mathsfit}{bold}{\encodingdefault}{\sfdefault}{bx}{n}
\DeclareMathOperator{\sign}{sign}
\theoremstyle{definition}
\newtheorem{definition}{Definition}[section]
\newtheorem*{remark}{Remark}
\newtheorem{theorem}{Theorem}[section]
\title{Model Mimic Attack: Knowledge Distillation for Provably Transferable Adversarial Examples}
\author{Kirill Lukyanov  \\
Research Center for Trusted Artificial Intelligence, \\
Ivannikov Institute for System Programming of the Russian Academy of Sciences, \\
Moscow Institute of Physics and Technology (National Research University) \\
Moscow, Russia \\
\AND
Andrew Perminov, Denis Turdakov \\
Research Center for Trusted Artificial Intelligence,\\
Ivannikov Institute for System Programming of the Russian Academy of Sciences \\
Moscow, Russia \\
\AND
Mikhail Pautov \\
AIRI, \\
Research Center for Trusted Artificial Intelligence,\\
Ivannikov Institute for System Programming of the Russian Academy of Sciences \\
Moscow, Russia \\
}
\begin{document}

\maketitle

\begin{abstract}
The vulnerability of artificial neural networks to adversarial perturbations in the black-box setting is widely studied in the literature. The majority of attack methods to construct these perturbations suffer from an impractically large number of queries required to find an adversarial example. In this work, we focus on knowledge distillation as an approach to conduct transfer-based black-box adversarial attacks and propose an iterative training of the surrogate model on an expanding dataset. This work is the first, to our knowledge, to provide provable guarantees on the success of knowledge distillation-based attack on classification neural networks: we prove that if the student model has enough learning capabilities, the attack on the teacher model is guaranteed to be found within the finite number of distillation iterations.




\end{abstract}

\section{Introduction}
The robustness of deep neural networks to input perturbations is a crucial property to integrate them into various safety-demanding areas of machine learning, such as self-driving cars, medical diagnostics, and finances.  Although neural networks are expected to produce similar outputs for similar inputs, they are long known to be vulnerable to adversarial perturbations [\cite{Szegedy2014intriguing}] -- small, carefully crafted input transformations that do not change the semantics of the input object, but force a model to produce a predefined decision. The majority of methods to study the adversarial robustness of neural networks are aimed at crafting adversarial perturbations which indicate that, in general, the predictions of a neural network are unreliable. 
The most effective and stealthy attacks require access to the model's gradients and are therefore of little practical use on their own [\cite{Goodfellow2014ExplainingAH, madry2017towards, Carlini2016TowardsET}].
However, in real-world scenarios, machine learning models are often deployed as services that are available via APIs. This setting, although poses certain limitations to exploring the robustness of machine learning as a service (MLaaS) models, does not make the computation of adversarial perturbations impossible [\cite{chen2020hopskipjumpattack,andriushchenko2020square,qin2023training,vob2024rusleattack}]. It is possible to compute an adversarial perturbation for the black-box model by either estimating its gradient in the vicinity of the target point \cite{ilyas2018black, bai2020improving} or using random search \cite{andriushchenko2020square} or applying knowledge transfer to obtain an auxiliary model to attack in the white-box setting \cite{li2023making, gubri2022lgv}.  

However, these methods may require a lot of queries to the target model and, in general,  are not guaranteed to find an adversarial example.  In this paper, we focus on the following research question: is it possible to provably compute an adversarial example for a given black-box classification neural network for a finite number of queries? To answer this question, we propose \emph{Model Mimic Attack}, the framework for conducting a black-box model transfer attack through multiple knowledge distillations. 

Knowledge distillation attack methods have been studied extensively in recent years.  It is used, for example, to protect intellectual property: the surrogate model obtained by extracting the knowledge of the source one and then is used to create watermarks that help to link the generated content and determine its origin [\cite{yuan2022attack, lukas2019deep, kim2023margin, pautov2024probabilistically}]. This approach is also used in attacks on black-box models [\cite{li2023making, gubri2022lgv}]. We propose iterative training of a series of surrogate models on an expanding dataset. This approach allows each subsequent surrogate model to better mimic the behavior of the black-box model.




\begin{figure}[tb]
\centering
\includegraphics[width=0.9\textwidth]{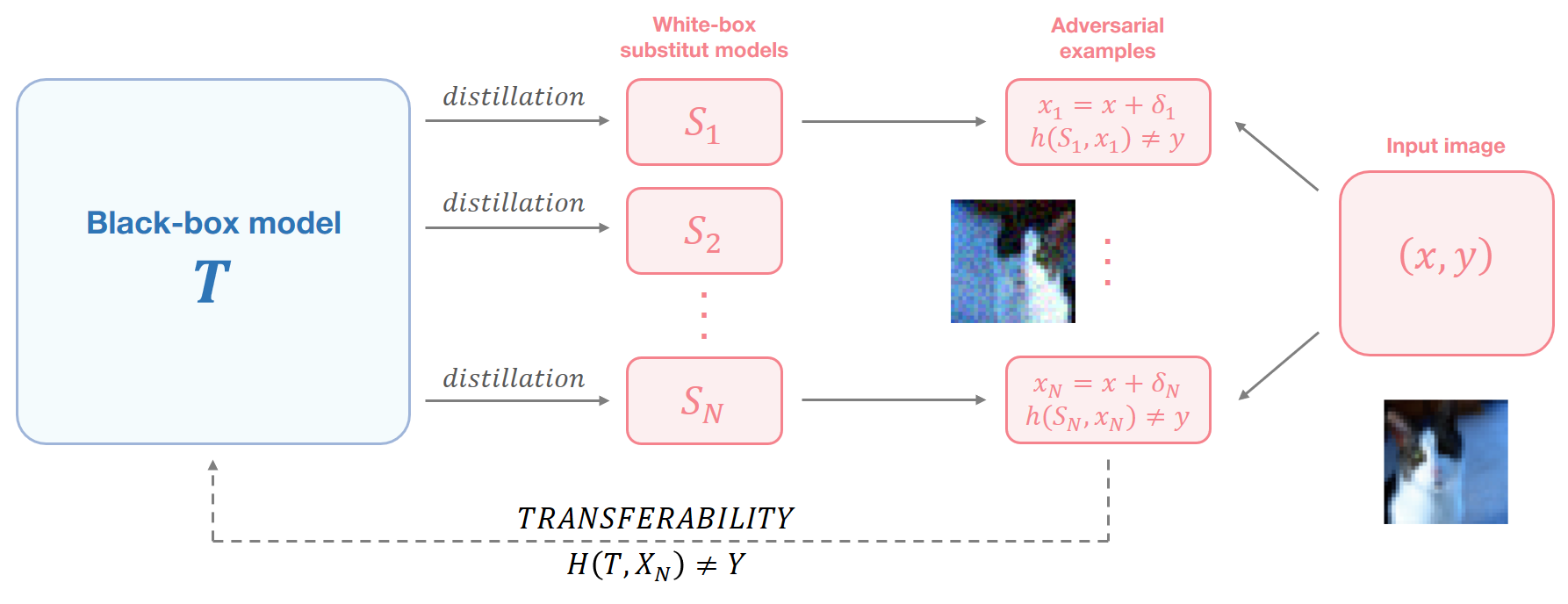}
\caption{The illustration of the proposed method. Given the black-box teacher model $T$, the set of student models $S_1, \dots, S_N$ is obtained via the soft-label knowledge distillation. Each student model is attacked in a white-box manner, and the set of adversarial examples $x_1, \dots, x_N$ is computed. Note that, according to theoretical analysis, there is an adversarial example $x_N$ for the student model $S_N$ which is transferable to the teacher model $T$ for some $N \in \mathbb{N}$. }
\label{fig:method_diagram}
\end{figure}

Our contributions are summarized as follows:
\begin{enumerate}
    \item We propose \emph{Model Mimic Attack}, a score-based black-box model transfer attack via knowledge distillation. The algorithm exploits the behavior of the target teacher network in the vicinity of the target point and yields the set of surrogate student models, which copy the predictions of the target model in the finite set of points. Then, the set of student models is used to compute an adversarial perturbation in the white-box setting, which transfers to the teacher model over a finite number of distillation iterations.
    \item We are the first, to our knowledge, to theoretically show that the distillation-based model transfer attack \emph{is guaranteed} to find an adversarial perturbation for the black-box teacher model. 
    \item We experimentally demonstrate the efficiency of the proposed approach over other black-box attack methods in the image classification domain. 
\end{enumerate}

\section{Related Work}
In this section, we provide a brief overview of existing black-box adversarial attacks and applications of knowledge distillation.  

\subsection{Transferable Adversarial Perturbations}
In this work, we focus on the transferability of an adversarial attack from a white-box model to a black-box one, emulating a black-box attack.  Black-box adversarial attacks can be divided into two categories: query-based and transfer-based. In a query-based attack, an adversary uses an output of the target model to compute an adversarial example. One way to do this is to estimate the gradient of the model to the input object [\cite{bhagoji2018practical,chen2017zoo,ilyasprior,guo2019simple}]. However, these methods usually require a lot of queries to the target model, which makes them infeasible in practice. In a transfer-based attack, an adversary generates adversarial examples by attacking one or several surrogate models [\cite{liu2022delving,qin2023training}]. The transferability of adversarial examples generated for surrogate models to the target model can be improved by utilizing data augmentations [\cite{xie2019improving}], exploiting gradients [\cite{wuskip}], gradient aggregation [\cite{liu2023enhancing}] or direction tuning [\cite{yang2023improving}].

There are plenty of black-box attack methods known, for example, ZOO [\cite{chen2017zoo}] and NES  [\cite{ilyas2018black}].
ZOO attack sequentially adds a small positive or negative perturbation to each pixel of the target image. It then queries the black-box model to estimate the gradient in the vicinity of the target image. 
NES attack works similarly. However, instead of changing pixel by pixel, a set of random images is generated, which are used to approximately estimate the gradients.

Current SOTA methods are Square Attack  [\cite{andriushchenko2020square}], NP-Attack [\cite{bai2020improving}],
MCG [\cite{yin2023generalizable}] and Bayesian attack [\cite{li2023making}].
Square Attack works differently. The attack selects an area of the image that is subject to attack and then gradually changes this area as the algorithm runs. And within the selected area, random pixels are selected that are changed.
NP-Attack leverages a neural predictor model to guide the search for adversarial perturbations by predicting the model’s output with fewer queries.
MCG is a meta-learning-based black-box attack that leverages a meta-classifier to generalize adversarial attacks across different black-box models. The idea is to train a meta-classifier to guide the adversarial example generation.
Bayesian attack enhances the transferability of adversarial examples by using a substitute model with Bayesian properties. The key idea is to make the substitute model more Bayesian through techniques like Monte Carlo dropout or stochastic weights, which results in better uncertainty estimation. This improved uncertainty estimation enhances the transferability of adversarial examples crafted on the substitute model to the target black-box model.

Note that Bayesian attack [\cite{li2023making}] belongs to the transfer-based category and implies access to part of the training data of the black-box model. In our work, we assume that an adversary has no access to the training data and, thus, we do not compare our approach against methods from the transfer-based category.



\subsection{Knowledge Distillation and Adversarial Robustness}
Knowledge distillation (KD) is a method to transfer the performance of a large teacher neural network to a smaller, lightweight student neural network [\cite{hinton2015distilling}].  Given a teacher model $T$, the framework is used to train a student network $S$ by solving an optimization problem:

\begin{equation}
    S = \arg\min_{S'} \mathbb{E}_{(x,y) \sim \mathcal{D}} \left[ \alpha \mathcal{L}(S'(x),y) + (1-\alpha)\tau^2 KL(S'(x), T(x)) \right],
\end{equation}
where $\mathcal{D}$ is the distillation dataset, $\mathcal{L}$ is the classification loss function used to assess the performance of the student model, KL is the Kullback-Leibler divergence and $\alpha, \tau$ are the scalar parameters. Knowledge distillation has been used in a large scope of problems, such as model compression [\cite{sun2019patient,wang2019private,li2020few}], data privacy [\cite{lyu2020differentially,chourasia2022knowledge,galichin2024glira,pautov2024probabilistically}], adapted for large language models [\cite{mcdonald2024reducing,gu2024minillm,kang2024knowledge}] and diffusion models [\cite{huang2024knowledge,yao2024progressively,yin2024one}].

It has recently been shown that knowledge distillation can be used to enhance the adversarial robustness of additive perturbations [\cite{papernot2016distillation,kuang2024improving,huang2023boosting}]. In contrast to a large teacher model which can attain a satisfactory level of adversarial robustness, it is challenging to make a small student model both robust and similar to the teacher one in performance [\cite{huang2023boosting}]. To deal with this issue, adversarially robust distillation was proposed [\cite{goldblum2020adversarially}]. This approach takes into account clean predictions [\cite{goldblum2020adversarially}] or probability vectors [\cite{zi2021revisiting}] of robust teacher model during the distillation procedure.  


\section{Problem Statement}
In this section, we formally discuss a problem statement, introduce the notations used throughout the paper, and formulate the research question. 
\subsection{Adversarial Example for a Classification Neural Network}
Suppose that $f:\mathbb{R}^d \to \Delta^K$ is the classification neural network that maps input object $x \in \mathbb{R}^d$ to the vector $f(x) \in \Delta^K$ of probabilities of $K$ classes and 
\begin{equation}
\label{eq:clf_rule}
    h(f,x) = \arg\max_{i \in [1,\dots,K]} f(x)_i
\end{equation}is the associated classification rule. We begin by formally defining an adversarial example for the given classification neural network and the transferability of an adversarial example between the two networks. 

\begin{definition}[Adversarial Example]
\label{def:adv_ex}
    Suppose that $x \in \mathbb{R}^d$ is the input object correctly assigned to class $y \in [1,\dots, K]$ by the network $f$, namely, $h(f,x) = y.$ Let $\delta>0$ be a fixed constant. Then, the object $x' \in \mathbb{R}^d: \|x-x'\|_2 \le \delta$ is the \emph{untargeted} adversarial example for $f$ at point $x$, if
    \begin{equation}
        \label{eq:adv_example}
        h(f,x') \ne h(f,x). 
    \end{equation}
    If $h(f, x') = t$ for some predefined class index $t$, then $x'$ is called \emph{targeted} adversarial example. 
\end{definition}

\begin{definition}[Transferable Adversarial Example]
    Let $x'$ be the adversarial example computed for the network $f$ at point $x$ and let $g:\mathbb{R}^d\to\Delta^K$  be the separate network. Then, $x'$ is transferable from $f$ to $g$, if
    \begin{equation}
    \label{eq:transfer}
        \begin{cases}
            h(f,x) = h(g,x), \\
            h(f,x') = h(g, x').
        \end{cases}
    \end{equation}
\end{definition}

\subsection{Knowledge Distillation of a Black-Box Model}
In this paper, we focus on using knowledge distillation [\cite{hinton2015distilling}] to construct adversarial perturbations for the given classification model deployed in the black-box setting. Namely, let $T:\mathbb{R}^d \to \Delta^K$ be the black-box teacher model trained on an unknown dataset $\mathcal{D}(T)$ and $S:\mathbb{R}^d \to \Delta^K$ be the white-box student model, possibly of a different architecture, and let $\mathcal{D}(S)$ be its training dataset. To approximate the teacher model, we apply soft-label knowledge distillation, which is done in two steps.  Firstly, the teacher model is used to collect the training dataset for the student model. In our setting, we use a hold out dataset $\mathcal{D}_h = \{(x_i,y_i)\}_{i=1}^m$ to construct $\mathcal{D}(S):$ 
\begin{equation}
\label{eq:student_data}
    \mathcal{D}(S) = \{(x_i, T(x_i))\}_{i=1}^m,
\end{equation}
where $x_i \in \mathcal{D}_h$ and  $T(x_i) \in \Delta^K$.
Then, the student network $S$ is trained on the dataset $\mathcal{D}(S)$ by minimizing an empirical risk
\begin{equation}
    L(S, \mathcal{D}(S)) = \frac{1}{m}\sum_{(x_i, y_i) \in \mathcal{D}(S)} l(S,x_i, y_i),
\end{equation}
where $l(S,x,y) = -\log (S(x)_y)$ is the cross-entropy loss function.

When the student model is trained, we ask the following research question. Given  $x \in \mathbb{R}^d: h(S,x) = h(T,x)$ and $\delta > 0$ from the definition \ref{def:adv_ex}, is it possible to compute an adversarial example for the model $S$ at point $x$ which is \emph{provably} transferable to $T$? In the next section, we answer this question and propose a knowledge distillation-based adversarial attack with transferability guarantees.


\section{Methodology}
In this section, we describe the proposed approach to generate adversarial examples for the black-box teacher model via knowledge distillation. In the last subsection, we prove that, under several assumptions,  our approach generates an adversarial example that is transferable to the teacher model within the finite number of iterations. 
\subsection{Model Mimic Attack: Student Follows its Teacher}

To perform an adversarial attack on the black-box teacher model $T$, we first apply soft-label knowledge distillation and obtain the white-box student model $S$. The training dataset for the student model is constructed by querying the teacher model and collecting its predictions for the points from the hold-out dataset $\mathcal{D}_h$, possibly disjoint from the teacher's training dataset ($\mathcal{D}(T):$  $\mathcal{D}_h \cap \mathcal{D}(T) = \emptyset$). In our setup, we use the test subset of the teacher's dataset as the hold-out dataset $\mathcal{D}_h.$

Recall that $\mathcal{D}(S) = \{(x_i, T(x_i))\}_{i=1}^m, $ according to \eqref{eq:student_data}.  Assuming that the student model has enough learning capability, we train it until it perfectly matches the teacher model on $\mathcal{D}(S)$, namely,
\begin{equation}
\label{eq:perfect_student}
\begin{cases}

    h(S,x_i) = h(T,x_i) = y_i  \\
    \|S(x_i) - T(x_i)\|_\infty < \frac{\varepsilon}{4},

\end{cases}
\end{equation}
$\text{for all} \ (x_i, y_i)  \in \mathcal{D}(S)$, where $\varepsilon>0$ is the predefined constant. In \eqref{eq:perfect_student}, the second condition reflects the ability of the student model to confidently mimic the teacher model on $\mathcal{D}(S).$

\subsection{Model Mimic Attack: Student Under Attack}
In this subsection, we describe a procedure to generate a single adversarial example for the student model.

When the student model is trained, we perform the white-box adversarial attack on it.  To do so, we use Projected Gradient Descent [PGD, \cite{madry2018towards}].  Given input object $x \in \mathbb{R}^d$ of class $y \in [1,\dots, K]$ correctly predicted by both teacher and student models,  PGD performs iterative gradient ascent to find an adversarial example $x'$ within $U_\delta(x)$, the $\delta-$neighborhood of $x.$ Namely, for all $t \in [1,\dots,M],$
\begin{equation}
\label{eq:pgd}
    \begin{cases}
        x^{t+1} = \text{Proj}_{U_{\delta}(x)} \left[x^{t} + \alpha \sign \nabla_{x^t}L(S, x^t, y)\right], \\
        x^1 = x, x' = x^{M},
    \end{cases}
\end{equation}
where $\alpha>0$ is the value of a single optimization step, $M$ is the maximum number of PGD iterations, $\text{Proj}_{U_\delta(x)}$ is the projection onto $U_\delta(x)$, defined as
\begin{equation}
    U_\delta(x) = \{x': \|x-x'\|_2 \le \delta\},
\end{equation}
and $L(S, x^t, y)$ is the loss function reflecting the error of the model $S$ on the sample $(x^t,y)$. In our setting, $L(S, x^t, y)$ is the cross-entropy loss.

When the adversarial example $x'$ for the student model $S$ is found, we verify if it transfers to the teacher model, namely, if $h(S,x') = h(T,x').$ Not that  $x'$ does not have to be a transferable adversarial example. If $h(S,x') \ne h(T,x'),$ then we add $x'$ to the training dataset $\mathcal{D}(S)$ of the student model and repeat both the training of $S$ and adversarial attack on it. 

\begin{remark}
To increase the computational efficiency of the attack, we generate not a single adversarial example $x'$ for the student model, but a batch $\{x'_1, \dots, x'_l\}$ of $l$ adversarial examples. The pseudo-code of the proposed method is presented in the Algorithm \ref{alg:mma}. Note that we use a Projected Gradient Descent attack because of its simplicity; our approach is not limited to a specific type of white-box attack. 
\end{remark}

\begin{algorithm}
    \caption{Model Mimic Attack}\label{alg:mma}
    \begin{algorithmic}[1]
    \Require Black-box teacher model $T$,  input object $x$ of class $y$, distance threshold $\delta$, gradient step $\alpha$, maximum number of PGD iterations $M$, maximum number of distillation iterations $N$, hold-out dataset $\mathcal{D}_h$, the number $l$ of adversarial examples to generate for the student model $S_i$
    \Ensure Set of student models $S_1, \dots, S_N$, the set  $AE(T)$ of adversarial examples for the teacher model $T$ 
    \State{$z \gets (x, T(x))$}
    \Comment{compute the logits of $T$ at the target point}
    \State {$\mathcal{D}(S) \gets \{(x_i, T(x_i))\}_{i=1}^m$ }
    \Comment{compute the training set $\mathcal{D}(S)$ according to the \eqref{eq:student_data}}
    \State $\mathcal{D}(S_1) \gets \mathcal{D}(S) \cup z$ 
    \Comment{initialize the training set for the first student model $S_1$}
    \State $AE(T) \gets \emptyset$ 
    \Comment{initialize the set of adversarial examples for the teacher model $T$}
    \For{$i = 1$ to $N$} 
        \State {$S_i \gets \texttt{train}(\mathcal{D}(S_i))$}
        \Comment{train the student model $S_i$  using $\mathcal{D}(S_i)$}
        \For{$j = 1$ to $l$} 
            \State $(x'_j, y'_j) \gets PGD(\alpha,\delta,S_i,(x,y))$ 
            \Comment{compute an adversarial example for the student model $S_i$ according to \eqref{eq:pgd}}
            \If{$h(S_i, x'_j) = h(T, x'_j)$}
            \Comment{check if the adversarial example transfers from $S_i$ to $T$}
                \State 
                $AE(T) \gets AE(T) \cup \{(x'_j, y'_j)\}$
                \Comment{update the set of adversarial examples for the model $T$}
            \EndIf
            \State {$\mathcal{D}(S_{i+1}) \gets \mathcal{D}(S_{i}) \cup \{(x'_j, T(x'_j))\}$}
            \Comment{update the training set for the model $S_{i+1}$}
        \EndFor
\EndFor
\end{algorithmic}
\end{algorithm}

\subsection{Model Mimic Attack: Provably Transferable Adversarial Examples}
It should be mentioned that, under several assumptions, the Algorithm \ref{alg:mma} is \emph{guaranteed} to find an adversarial example that is transferable from the student model to the teacher model within the finite number of iterations. Namely, let $T$ be the teacher model and $S_i$ be the student model on $i'$th iteration with the corresponding training dataset $\mathcal{D}(S_i).$ Let $x \in \mathbb{R}^d$ be the input object correctly assigned by the teacher model to class $y \in [1,\dots, K],$ and $\delta>0$ be the distance threshold. Suppose that for every $i \in \mathbb{Z}_+, $ the learning capability conditions from the \eqref{eq:perfect_student} hold. Then, the following theorem holds.

\begin{theorem}
\label{th:main}
    If $f_i = S_i - T$ be the functions with the bounded gradient in $U_\delta(x)$ for every $i \in \mathbb{Z}_{+}$ and let  
    \begin{equation}
        \beta = \sup_{f_i}\sup_{x' \in U_\delta(x)}\|\nabla f_i(x')\|_F.
    \end{equation}
    
    Suppose that for every $i \in \mathbb{Z}_{+}$,  Algorithm \ref{alg:mma} yields an adversarial example for the model $S_i$ within the $\delta-$neighborhood of $x$. Then, exists $N \in \mathbb{Z}_{+}$ such that Algorithm \ref{alg:mma} on $N'$th iteration yields an adversarial example transferable from $S_N$ to $T$.
\end{theorem}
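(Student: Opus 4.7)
The plan is to argue by contradiction. Suppose the algorithm never produces a transferable adversarial example, so for every $i \in \mathbb{Z}_{+}$ we have $h(S_i, x'_i) \ne h(T, x'_i)$ and, by the algorithm's update rule, the point $x'_i$ is appended (with label $T(x'_i)$) to $\mathcal{D}(S_j)$ for all $j > i$. This generates an infinite sequence $\{x'_i\}_{i\ge 1}$ inside the compact set $U_\delta(x) \subset \mathbb{R}^d$.

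The first main step is to exploit compactness: by Bolzano--Weierstrass, $\{x'_i\}$ has a convergent (hence Cauchy) subsequence $\{x'_{i_k}\}$. Fix any $\eta>0$; then there exist indices $i < j$ on this subsequence with $\|x'_i - x'_j\|_2 < \eta$. Since $j > i$, the sample $(x'_i, T(x'_i))$ belongs to $\mathcal{D}(S_j)$, so by the ``perfect student'' condition \eqref{eq:perfect_student} one has $\|f_j(x'_i)\|_\infty = \|S_j(x'_i) - T(x'_i)\|_\infty < \varepsilon/4$. The bounded-gradient hypothesis gives $f_j$ a global $\beta$-Lipschitz constant on $U_\delta(x)$ via the mean-value inequality and $\|\cdot\|_\infty \le \|\cdot\|_2$, so
\begin{equation*}
\|f_j(x'_j)\|_\infty \le \|f_j(x'_i)\|_\infty + \beta\|x'_j - x'_i\|_2 < \tfrac{\varepsilon}{4} + \beta\eta.
\end{equation*}
Since $\eta$ is arbitrary, $\|S_j(x'_j) - T(x'_j)\|_\infty$ can be driven below any prescribed threshold by going far enough along the subsequence.

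Next I would convert this closeness in $\ell_\infty$ into argmax agreement and derive the contradiction. Setting $a = h(S_j, x'_j)$ and $b = h(T, x'_j)$ with $a \ne b$, the defining inequalities $S_j(x'_j)_a \ge S_j(x'_j)_b$ and $T(x'_j)_b \ge T(x'_j)_a$ combine to give
\begin{equation*}
f_j(x'_j)_a - f_j(x'_j)_b \;=\; [S_j(x'_j)_a - S_j(x'_j)_b] + [T(x'_j)_b - T(x'_j)_a] \;\ge\; \gamma_T(x'_j),
\end{equation*}
where $\gamma_T(x'_j)$ is the top-1/top-2 confidence gap of $T$ at $x'_j$. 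Hence $\|f_j(x'_j)\|_\infty \ge \gamma_T(x'_j)/2$. Choosing $\eta$ small enough that $\varepsilon/4 + \beta\eta < \gamma_T(x'_j)/2$ yields the contradiction, so the assumption that no transferable example is ever found must fail; take $N$ to be the first iteration at which transfer succeeds.

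The main obstacle is the last step: the argument requires a uniform positive lower bound on $T$'s confidence gap along the adversarial sequence, which is not explicitly stated in the hypotheses. I would handle this by assuming (or making explicit in a ``non-degeneracy'' remark) that the predefined constant $\varepsilon$ in \eqref{eq:perfect_student} is chosen smaller than twice the infimum of $\gamma_T(\cdot)$ over the relevant region of $U_\delta(x)$; under that mild condition, combined with the freedom to shrink $\eta$, the Cauchy tail of the subsequence forces $\|f_j(x'_j)\|_\infty$ strictly below $\gamma_T(x'_j)/2$, which is incompatible with the persisting argmax disagreement. Everything else in the argument---compactness, Lipschitz transfer of the training-set accuracy from $x'_i$ to the nearby $x'_j$, and the algebra relating $f_j$ to the argmax gap---is a straightforward consequence of the stated hypotheses and \eqref{eq:perfect_student}.
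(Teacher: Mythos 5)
Your argument is correct and is essentially the paper's own proof: both extract a convergent subsequence of the failed adversarial examples via Bolzano--Weierstrass in the compact set $U_\delta(x)$, invoke the mimicry condition \eqref{eq:perfect_student} at points that have already been appended to the student's training set, use the bounded-gradient constant $\beta$ through the mean-value inequality to carry that $\varepsilon/4$-closeness over to the current adversarial point, and finish with an ``$\varepsilon$ small enough relative to a top-two confidence gap'' step that converts $\ell_\infty$-closeness of the student and teacher outputs into agreement of their argmaxes. The non-degeneracy issue you flag is not specific to your route---the paper's proof needs exactly the same thing and states it just as loosely (it requires $\varepsilon < (p_1-p_2)/2$ with $p_1,p_2$ the two largest components of $S_N(z_N)$, a quantity that itself depends on $N$ and hence on $\varepsilon$)---so the remaining differences are cosmetic: you argue by contradiction and anchor the Lipschitz step at a nearby previously-added adversarial example, whereas the paper argues directly and chains through $x$ and the preceding subsequence element $z_i$, both of which also lie in the training set.
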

\begin{proof}
    Let $\{x'_i\}_{i=1}^\infty$ be the sequence of adversarial examples generated by Algorithm \ref{alg:mma} such that $\|x'_i - x\|_2 \le \delta$ and $x'_i$ is the adversarial example for the model $S_i$. Then, the sequence $\{x'_i\}_{i=1}^\infty$ is bounded in $U_\delta(x)$ and, hence, there exists the subsequence $\{x'_{i_j}\}_{j=1}^\infty$ such that exists
    \begin{equation}
        \lim_{j \to \infty } x'_{i_j} = z \in U_\delta(x).
    \end{equation}
    Without the loss of generality, assume that $z \ne x$ and let $\{x'_{i_j}\}_{j=1}^\infty = \{z_i\}_{i=1}^\infty.$ 

    Then, 
    \begin{align}
    \label{eq:main_ineq}
       & |\|f_{i+1}(x)\|_\infty - \|f_{i+1}(z_{i+1})\|_\infty| \le   \|f_{i+1}(x) - f_{i+1}(z_{i+1})\|_\infty \le \\ 
       & \le \|f_{i+1}(x) - f_{i+1}(z_i)\|_\infty +     \|f_{i+1}(z_i) - f_{i+1}(z_{i+1})\|_\infty \le \nonumber \\
       & \le \|f_{i+1}(x)\|_\infty + \|f_{i+1}(z_i)\|_\infty +     \|f_{i+1}(z_i) - f_{i+1}(z_{i+1})\|_\infty \le \\
       & \le \frac{\varepsilon}{4} + \frac{\varepsilon}{4} + \|f_{i+1}(z_i) - f_{i+1}(z_{i+1})\|_\infty, \nonumber
    \end{align}
    where the last inequality is due to conditions from \eqref{eq:perfect_student}.

    According to the mean value theorem, 
    \begin{equation}
        f_{i+1}(z_i) - f_{i+1}(z_{i+1}) = \nabla f_{i+1}(\tau_{i+1})^\top (z_i - z_{i+1}), 
    \end{equation}
    for some $\tau_{i+1} \in [z_i, z_{i+1}] \subset U_\delta(x)$.

    Since $\lim_{i \to \infty} z_i = z$, then $\lim_{i \to \infty} \|z_{i} - z_{i+1}\|_F = 0$ and $\exists N\in \mathbb{Z}_{+}: \|z_{N-1} - z_{N}\|_F < \frac{\varepsilon}{4\beta}.$

    Then,
    \begin{align}
    \label{eq:mean_val}
      \| f_{N}(z_{N-1}) - f_{N}(z_{N})\|_\infty & \le \| f_{N}(z_{N-1}) - f_{N}(z_{N})\|_F   \le \|\nabla f_{N}(\tau_N)\|_F \|z_{N-1} - z_{N}\|_F < \\& <  \frac{\varepsilon}{4} \nonumber .  
    \end{align}
    Substituting \eqref{eq:mean_val} into \eqref{eq:main_ineq}, we get

    \begin{equation}
        |\|f_{N}(x)\|_\infty - \|f_{N}(z_{N})\|_\infty| < \frac{3\varepsilon}{4}, \ \text{yielding}\   \|f_{N}(z_{N})\|_\infty < \|f_{N}(x)\|_\infty + \frac{3\varepsilon}{4} = \varepsilon.
    \end{equation}
    By setting $\varepsilon$ to be small enough, for example, 
    \begin{equation}
         \varepsilon < \frac{p_1 - p_2}{2}, \ \text{where $p_1, p_2$  are the two largest components of $S_N(z_N)$,} 
    \end{equation}
    we get $h(S_N, z_N) = h(T, z_N)$, what finalizes the proof. 
   
\end{proof}

\section{Experiments}

This section will describe the experiments and everything needed to reproduce them. In particular, a description of the datasets, a method for evaluating the experiments, a description of the methods we compare with, and the methodology for conducting the experiments.

\subsection{Setup of Experiments}

\paragraph{Datasets and Training.}

In our experiments, we use CIFAR-10 and CIFAR-100 [\cite{krizhevsky2009learning}] as the training datasets for the teacher model. We use ResNet50 [\cite{he2016deep}] as the teacher model $T$, which was trained for $250$ epochs to achieve high classification accuracy (namely, 82\% for CIFAR-10 and 47\% for CIFAR-100. To train the teacher model, we use the SGD optimizer with the learning rate of $0.1$, the weight decay of $10^{-4}$, and the momentum of $0.9$.

\paragraph{MMAttack Setup.}

We use ResNet18 and SmallCNN as the white-box student models. The architecture of SmallCNN is presented in the Appendix. 
We conduct the PGD attack on the student models with the following parameters: the number of PGD steps is set to be $M=10$, the gradient step is set to be $\alpha = 0.005$, the distance threshold is set to be $\delta = 0.05$. The detailed architecture of the Small CNN model is presented in the appendix \ref{Appendix:A}.




\paragraph{Methods for Comparison.}

In this section, we briefly list the set of methods we compare our approach against. We evaluate MMAttack against ZOO [\cite{chen2017zoo}], NES [\cite{ilyas2018black}] as the main competitors.  Among the black-box attack methods based on a random search, we choose Square attack [\cite{andriushchenko2020square}] as the state-of-the-art in terms of an average number of queries to conduct an attack. In the group of methods using gradient estimation, NP-Attack [\cite{bai2020improving}] is among the most efficient attacks. In the category of combined methods, we choose MCG [\cite{yin2023generalizable}]. The hyperparameters that were used in the experiments with Methods for Comparison are described in detail in the appendix \ref{Appendix:B}.

Note that the MCG  algorithm originally assumes the training on the data from a distribution that is close to the teaches model's one, which in general may not be known. Here, we highlight that our method does not have such a limitation. 

\paragraph{Evaluation Protocol.}


To illustrate the efficiency of the proposed approach, we report the Average Query Number (AQN) and demonstrate the trade-off between AQN and the Average Success Rate (ASR).  AQN denotes the number of queries required to generate all the adversarial examples for the black-box model, averaged over all the examples.  ASR measures the fraction of adversarial examples assigned to a different class in an untargeted attack setting or to the predefined other class in the targeted attack setting. For AQN, a lower value indicates better attack performance, while for ASR, a higher value indicates a better attack performance. Note that both metrics are calculated over successful adversarial attacks only. In this paper, the emphasis is made on minimizing the AQN.  



\subsection{Results of Experiments}

\begin{table}[t]
    \caption{Comparison of black-box attack methods. We report the average number of queries (AQN) required to generate the first adversarial example for the black-box model. Here, $\delta$ denotes the value of the maximum possible distance from the target point in terms of $l_\infty$ norm. ($\textbf{Lit}$) denotes the metric values taken from the literature.} 
    \label{tab:table_queries_first}
\begin{center}

\begin{tabular}{cccc}
\toprule
 $\mathcal{D}$ & Attack & $\delta$ & AQN (\textdownarrow) \\
\midrule
 \multirow{7}{*}{CIFAR-10} 
 &  ZOO [\cite{chen2017zoo}] & $0.05$ & $ \ge 3 \times 10^5 $ \\
 &  NES [\cite{ilyas2018black}] & $0.1$ & $ 3578 $ \\
 &  Square [\cite{andriushchenko2020square}] & $0.1$ & $ 368 $ \\
 &  NP-Attack [\cite{bai2020improving}] ($\textbf{Lit}$) & $0.05$ & $500 $ \\
 &  MCG [\cite{yin2023generalizable}] ($\textbf{Lit}$) & $0.1$ & $ 130 $ \\
 &  MMAttack resnet18 ($\textbf{ours}$) & $0.05$ & $ 530 $ \\
 &  MMAttack SmallCNN ($\textbf{ours}$) & $0.05$ & $ \textbf{32.8} $ \\
 \midrule 
\multirow{7}{*}{CIFAR-100} 
 &  ZOO [\cite{chen2017zoo}] & $0.05$  & $ \ge 3 \times 10^5 $ \\
 &  NES [\cite{ilyas2018black}] & $0.1$  & $ 4884 $ \\
 &  Square [\cite{andriushchenko2020square}] & $0.1$  & $ 193 $ \\
 &  NP-Attack [\cite{bai2020improving}] & $0.05$  & $ 325 $ \\
 &  MCG [\cite{yin2023generalizable}] ($\textbf{Lit}$) & $0.1$  & $ 48 $ \\
 &  MMAttack resnet18 ($\textbf{ours}$) & $0.05$  & $ 407 $ \\
 &  MMAttack SmallCNN ($\textbf{ours}$) & $0.05$  & $ \textbf{24} $ \\

\bottomrule
\end{tabular}
\end{center}
\end{table}

In the experiments, ZOO, NES, and Square attack methods were executed $100$ times with different random seeds, NP-Attack, 
MCG, 
MMA methods were executed $30$ times. 

Table \ref{tab:table_queries_first} shows a comparison of existing SOTA methods and the MMAttack method proposed in this work with two different substitute model architectures on the CIFAR-10 and CIFAR-100 datasets. The best results are highlighted in bold. It can be seen that the MMAttack method with the substitute model SmallCNN outperforms the competitors in terms of the AQN metric. (Table data for the MCG method on the CIFAR-100 dataset was taken from  [\cite{yin2023generalizable}]. Table data for the MCG and NP-Attack methods for the CIFAR-10 dataset were taken from  [\cite{zheng2023blackboxbench}]).

Note that if the results of a method presented in the literature do not match the results obtained in our implementation, then the result with the smallest number of average queries is reported. In the tables, the results taken from the literature are marked as ($\textbf{Lit}$).


\subsection{Ablation Study}

Note that the success of our black-box attack crucially depends on the architecture of the white-box student model. On the one hand, the student model does not have to have many training parameters since it implies several retraining iterations. On the other hand, it has to have enough learning capacity to mimic the behavior of the black-box model in the vicinity of the target point. In Table \ref{tab:black-white-comp}, we report the AQN values for the different pairs of teacher and student models on the CIFAR-10 dataset. Together with the average number of queries, we report the size of the initial training dataset $\mathcal{D}(S_1)$ of the student model and the number of adversarial examples to generate for the student model, $l$.  We found that the simpler the architecture of the student model, the fewer queries to the teacher model are required to conduct a successful attack. 







\begin{table}[t]
    \caption{Impact of hyperparameters on the performance of the MMAttack.}
    \label{tab:black-white-comp}
\begin{center}

\begin{tabular}{ccccc}
\toprule
 Teacher model, $T$ & Student model, $S$ & Initial dataset size, $|\mathcal{D}(S_1)|$ & $l$ & AQN (\textdownarrow) \\
\midrule
 ResNet101 & ResNet34 & 800 & 400 & $ 4520 $ \\
 ResNet50 & ResNet34 & 600 & 400 & $ 4160 $ \\
 ResNet101 & ResNet18 & 600 & 300 & $ 1560 $ \\
 ResNet50 & ResNet18 & 600 & 200 & $ 530 $ \\
 ResNet34 & ResNet18 & 300 & 30 & $ 455 $ \\
 ResNet101 & SmallCNN & 10 & 10 & $ 37.7 $ \\
 ResNet50 & SmallCNN & 10 & 10 & $ 32.8 $ \\
 ResNet34 & SmallCNN & 10 & 10 & $ 34 $ \\
 ResNet50 & SmallCNN & 5 & 5 & $ - $ \\

\bottomrule
\end{tabular}
    
\end{center}
\end{table}

The initial set size, $|\mathcal{D}(S_1)|$, represents the number of random data points to be included in the initial training dataset of the white-box student model. It can be seen from Table \ref{tab:black-white-comp}, that the more complex the student model is, the larger this parameter should be. The same is true for the number of adversarial examples for the student model, $l$. 

Note that there is no AQN value corresponding to  $|\mathcal{D}(S_1)|=5$ and $l=5$. This is because the Algorithm \ref{alg:mma} does not succeed in finding a single adversarial example for the black-box teacher model until it reaches the maximum iterations threshold. 


It is also worth mentioning that Model Mimic Attack implies a certain trade-off between ASR and AQN metrics. At the start, when the size of the training dataset of the student model is relatively small and very few iterations of knowledge distillation are passed, the algorithm is less likely to find an adversarial example for the teacher model. In contrast, after more distillation iterations, the algorithm tends to find more transferable adversarial examples on each iteration. 
In tables \ref{tab:ASR-AQN_balance_10} and \ref{tab:ASR-AQN_balance_100}, we show the trade-off between the ASR and AQN metrics from one distillation iteration to another: when the number of passed distillation iterations increases, so does the number of queries to the teacher model used to collect additional training samples for the student model by that iteration, $QN_1$. In contrast, the number of queries remaining to find an attack on the black-box model, $QN_2$, decreases (here, we fix the total number of queries to be $QN_1 + QN_2 = 200$).  


However, if the goal is not to obtain the minimum value of the AQN metric, but to improve the trade-off between the ASR and AQN metrics, one could run several cycles of the algorithm to better study the behavior of the teacher model in the vicinity of the target point.

\begin{table}[ht]
\caption{Trade-off between ASR and AQN metrics for MMAttack, CIFAR-10 dataset. $QN_1$ represents the number of data points  added to the training dataset of the student model by corresponding iteration; $QN_2$ represents the attack budget, or upper bound of the number of queries to find an attack on the black-box model.}
    \label{tab:ASR-AQN_balance_10}
\begin{center}
    
\begin{tabular}{cccccc}
\toprule
Iteration number & $QN_1$ & $QN_2$ & Number of generated attacks & ASR (\textuparrow) & AQN (\textdownarrow) \\ \hline
1 & 10 & 190 & 121.67 & 0.66 & 2.36 \\  
2 & 20 & 180 & 112.05 & 0.68 & 2.50 \\  
3 & 30 & 170  & 105.57 & 0.67 & 2.67 \\  
4 & 40 & 160  & 99.38  & 0.67 & 2.86 \\  
5 & 50 & 150 & 93.10  & 0.67 & 3.03 \\  
6 & 60 & 140 & 87.48  & 0.67 & 3.24 \\  
7 & 70 & 130 & 81.00  & 0.67 & 3.49 \\  
8 & 80 & 120 & 75.14  & 0.68 & 3.74 \\  
9 & 90 & 110 & 69.19  & 0.68 & 4.05 \\  
10 & 100 & 100 & 62.81  & 0.68 & 4.43 \\  
11 & 110 & 90 & 56.43  & 0.70 & 4.83 \\  
12 & 120 & 80 & 50.43  & 0.69 & 5.50 \\  
13 & 130 & 70 & 43.62  & 0.69 & 6.35 \\  
14 & 140 & 60 & 37.43  & 0.68 & 7.42 \\  
15 & 150 & 50 & 30.95  & 0.67 & 9.13 \\  
16 & 160 & 40 & 25.00  & 0.68 & 11.16 \\  
17 & 170 & 30 & 19.05  & 0.68 & 14.62 \\  
18 & 180 & 20 & 12.62  & 0.74 & 20.39 \\  
19 & 190 & 10 & 6.86   & 0.72 & 38.38 \\  \hline
\end{tabular}
\end{center}
\end{table}

\begin{table}[ht]
\caption{Trade-off between ASR and AQN metrics for MMAttack, CIFAR-100 dataset. $QN_1$ represents the number of data points  added to the training dataset of the student model by corresponding iteration; $QN_2$ represents the attack budget, or upper bound of the number of queries to find an attack on the black-box model.} 
    \label{tab:ASR-AQN_balance_100}
\begin{center}

\begin{tabular}{cccccc}
\toprule
Iteration number & $QN_1$ & $QN_2$ & Number of generated attacks & ASR (\textuparrow) & AQN (\textdownarrow) \\ \hline
1 & 10 & 190  & 163.17 & 0.84 & 1.38 \\  
2 & 20 & 180  & 153.17 & 0.85 & 1.46 \\  
3 & 30 & 170  & 144.90 & 0.85 & 1.54 \\  
4 & 40 & 160  & 135.86 & 0.85 & 1.64 \\  
5 & 50 & 150  & 127.34 & 0.85 & 1.75 \\  
6 & 60 & 140  & 119.14 & 0.85 & 1.87 \\  
7 & 70 & 130  & 110.48 & 0.85 & 2.02 \\  
8 & 80 & 120  & 102.03 & 0.85 & 2.19 \\  
9 & 90 & 110  & 93.59  & 0.85 & 2.39 \\  
10 & 100 & 100  & 85.52  & 0.85 & 2.63 \\  
11 & 110 & 90  & 76.97  & 0.84 & 2.92 \\  
12 & 120 & 80  & 68.48  & 0.85 & 3.25 \\  
13 & 130 & 70  & 59.97  & 0.86 & 3.69 \\  
14 & 140 & 60  & 51.62  & 0.86 & 4.30 \\  
15 & 150 & 50  & 42.97  & 0.86 & 5.14 \\  
16 & 160 & 40  & 34.72  & 0.86 & 6.37 \\  
17 & 170 & 30  & 26.41  & 0.85 & 8.42 \\  
18 & 180 & 20  & 17.66  & 0.88 & 12.28 \\  
19 & 190 & 10  & 8.72   & 0.90 & 24.08 \\ \hline
\end{tabular}
\end{center}
\end{table}


The choice of the white-box attack method plays an important role in finding the transferable adversarial example: on one hand, the more powerful the white-box attack is, the more frequently an adversarial example will be found for the student model; on the other hand, the faster the attack is, the more distillation iterations can be performed within a limited time. In this work, a projected gradient descent (PGD) attack with the $l_{\infty}$ norm constraint is used, but the method is not limited to any specific type of white-box attack. It is possible to use variants of the white-box attack with $l_2$ or $l_1$ constraints, to conduct an attack in a targeted setting or use more complicated attack methods. In any case, MMAttack is expected to have similar properties.
The optimal choice depends on the specific domain and the effectiveness of each white-box attack method on a given dataset.




    
    

\section{Limitations}
Note that the transferability guarantee from Theorem \ref{th:main} is given for the soft-label distillation. It is worth mentioning that the Theorem can not be adapted to the hard-label distillation without significant changes. Instead, to provide the transferability guarantee in hard-label distillation, when the teacher model outputs the predicted class label only, one can estimate the \emph{probability of transferability} of an adversarial example within the finite number of iterations, conditioned on the white-box attack. If the lower bound of this probability is separated from zero, one can estimate the expected number of distillation iterations required to yield the transferable adversarial example.


\section{Conclusion and Future Work}
In this paper, we propose the Model Mimic Attack, the first framework to compute adversarial perturbations for a black-box neural network that is guaranteed to find an adversarial example for the latter. To conduct an attack, we apply knowledge distillation to obtain the student model, which is essentially the functional copy of the black-box teacher network. Then, we perform the white-box adversarial attack on the student model and theoretically show that, under several assumptions, the attack transfers to the teacher model. We demonstrate experimentally that a successful adversarial attack can be found within a small number of queries to the target model, making the approach feasible for practical applications. Possible directions for future work include an extension of the transferability guarantees to the hard-label distillation and adaptation of the proposed method for other domains, in particular, for attacking large language models.  


\bibliography{iclr2025_conference}
\bibliographystyle{iclr2025_conference}

\appendix
\section{Appendix: Architecture of SmallCNN}
\label{Appendix:A}

\begin{verbatim}
SmallCNN(
  (features): Sequential(
    (0): Conv2d(3, 64, kernel_size=(3, 3), stride=(1, 1), 
         padding=(1, 1))
    (1): ReLU(inplace)
    (2): MaxPool2d(kernel_size=2, stride=2, padding=0, 
         dilation=1, ceil_mode=False)
    (3): Conv2d(64, 128, kernel_size=(3, 3), stride=(1, 1), 
         padding=(1, 1))
    (4): ReLU(inplace)
    (5): MaxPool2d(kernel_size=2, stride=2, padding=0, 
         dilation=1, ceil_mode=False)
    (6): Conv2d(128, 256, kernel_size=(3, 3), stride=(1, 1), 
         padding=(1, 1))
    (7): ReLU(inplace)
    (8): MaxPool2d(kernel_size=2, stride=2, padding=0, 
         dilation=1, ceil_mode=False)
  )
  (classifier): Sequential(
    (0): Linear(in_features=4096, out_features=512, bias=True)
    (1): ReLU(inplace)
    (2): Linear(in_features=512, out_features=10 or 100, 
         bias=True)
  )
)
\end{verbatim}









\section{Appendix: Hyperparameters of the compared attack methods}
\label{Appendix:B}

\begin{table}[ht]
    \centering
    \caption{Hyperparameters of the compared attack methods}
    \begin{tabular}{@{}ll@{}}
        \toprule
        \textbf{Method} & \textbf{Hyperparameters} \\ \midrule
        ZOO attack & 
        \begin{tabular}[c]{@{}l@{}}
            $\epsilon = 0.05$ \\
            num\_iterations = 5000 \\
            learning\_rate = 0.01
        \end{tabular} \\ \midrule
        
        NES attack & 
        \begin{tabular}[c]{@{}l@{}}
            $\epsilon = 0.1$ \\
            num\_samples = 50 \\
            num\_iterations = 300 \\
            $\sigma = 0.01$ \\
            $\alpha = 0.03$
        \end{tabular} \\ \midrule
        
        Square attack & 
        \begin{tabular}[c]{@{}l@{}}
            $\epsilon = 0.1$ \\
            num\_queries = 5000 \\
            $p\_init = 0.8$
        \end{tabular} \\ \midrule
        
        NP attack & 
        \begin{tabular}[c]{@{}l@{}}
            $\epsilon = 0.05$ \\
            num\_iterations = 1000 \\
            learning\_rate = 0.01
        \end{tabular} \\ \midrule
        
        MCG & 
        \begin{tabular}[c]{@{}l@{}}
            down\_sample\_x = 1 \\
            down\_sample\_y = 1 \\
            finetune\_grow = True \\
            finetune\_reload = True \\
            finetune\_perturbation = True
        \end{tabular} \\ 
        \bottomrule
    \end{tabular}
\end{table}

\end{document}